\documentclass{article}

\usepackage{microtype}
\usepackage{graphicx}
\usepackage{subcaption}
\usepackage{booktabs} 
\usepackage{mdframed} 
\usepackage{enumitem}
\usepackage{amssymb}

\usepackage{hyperref}
\usepackage{comment}
\usepackage{multirow}
\usepackage{array}
\newcolumntype{Y}{>{\centering\arraybackslash}X}
\newcolumntype{Z}[1]{>{\centering\arraybackslash}p{#1\linewidth}}

\usepackage[accepted]{icml2026}

\usepackage{amsmath}
\usepackage{amssymb}
\usepackage{mathtools}
\usepackage{amsthm}
\usepackage{bm}
\usepackage{bbm}
\usepackage[table]{xcolor}
\usepackage{colortbl}
\definecolor{lightgray}{gray}{0.95}

\usepackage{pifont}
\newcommand{\xmark}{\ding{55}}
\definecolor{checkmark}{HTML}{40826D}
\definecolor{xmark}{HTML}{E62020}

\newcommand{\bccheck}{\large\checkmark}
\newcommand{\bccross}{\large\xmark}

\usepackage[capitalize,noabbrev]{cleveref}

\theoremstyle{plain}
\newtheorem{theorem}{Theorem}[section]

\newtheorem{lemma}[theorem]{Lemma}

\theoremstyle{definition}
\newtheorem{definition}[theorem]{Definition}

\theoremstyle{remark}

\newcommand{\lliu}[1]{{\color{brown}{#1}}}

\usepackage[textsize=tiny]{todonotes}

\icmltitlerunning{From Out-of-Distribution Detection to Hallucination Detection: A Geometric View}

\begin{document}

\twocolumn[
  \icmltitle{
  From Out-of-Distribution Detection to Hallucination Detection:\\
  A Geometric View
    }



  \icmlsetsymbol{equal}{*}

  \begin{icmlauthorlist}
    \icmlauthor{Litian Liu}{comp}
    \icmlauthor{Reza Pourreza}{comp}
    \icmlauthor{Yubing Jian}{comp}
    \icmlauthor{Yao Qin}{sch}
    \icmlauthor{Roland Memisevic}{comp}
  \end{icmlauthorlist}

  \icmlaffiliation{comp}{Qualcomm AI Research.  Qualcomm AI Research is an initiative of Qualcomm Technologies, Inc.}
  \icmlaffiliation{sch}{UC Santa Barbara}

  \icmlcorrespondingauthor{Litian Liu}{litiliu@qti.qualcomm.com}

  \icmlkeywords{Machine Learning, ICML}

  \vskip 0.3in
]



\printAffiliationsAndNotice{}  

\begin{abstract}
Detecting hallucinations in large language models is a critical open problem 
with significant implications for safety and reliability. 
While existing hallucination detection methods achieve strong performance 
in question‑answering tasks, they remain less effective on tasks requiring 
reasoning. 
In this work, we revisit hallucination detection through the lens of 
out‑of‑distribution (OOD) detection, a well‑studied problem in areas like 
computer vision. 
Treating next‑token prediction in language models as a classification task allows 
us to apply OOD techniques, if we bring to bear appropriate modifications to 
account for the structural differences in large language models. 
{We show that approaches based on OOD detection yield training-free, single-sample based detectors, achieving strong accuracy in 
hallucination detection in reasoning tasks.}
Overall, our work suggests that 
reframing hallucination detection as OOD detection provides a promising and 
scalable pathway toward language model safety.
\end{abstract}

\section{Introduction}


Hallucination remains a fundamental obstacle to the reliable deployment of 
Large Language Models (LLMs), and their detection has therefore been 
an important research area in recent years. 
One prominent line of work \cite{azaria2023internal, kossen2024semantic, li2023halueval, liu2023cognitive, manakul2023selfcheckgpt, marksgeometry, su2024unsupervised, wang2024latent} trains a separate classifier to detect hallucinations. 
These methods can be sensitive to distribution shifts and incur high training costs. 
Training-free methods \cite{azaria2023internal, kossen2024semantic, li2023halueval, liu2023cognitive, manakul2023selfcheckgpt, marksgeometry, su2024unsupervised, wang2024latent}, by contrast, measure discrepancies across multiple samples to detect hallucinations.
While they avoid training overhead, they impose significant computational costs during inference—particularly in multi-step reasoning. 
Furthermore, although such methods perform well on concise question-answering tasks, they face challenges in reasoning, where the inherent diversity of valid reasoning paths makes comparing multiple outputs conceptually difficult. 
This motivates the need for \emph{training-free}, \emph{single-sample} based hallucination detection algorithms within the increasingly critical domain of reasoning.


Conceptually, hallucination detection is reminiscent of out-of-distribution (OOD) detection \cite{hendrycks2019scaling, liang2018enhancing, liu2020energy, sun2021react, sun2022dice, xu2024out, lee2018simple, sun2022out, liu2024fast, liu2025detecting}, a dominant research area for classification tasks.
In OOD detection, the goal is to identify test samples whose class labels were not seen 
during training. 
For such samples, the model will still produce a prediction from the set of training classes, despite having no knowledge of the true label—effectively generating a ``hallucination” of the classifier. 
At their core, both OOD detection and hallucination detection in LLMs boil down to measuring the model’s uncertainty \cite{semanticentropy,liu2024fast}.


\begin{figure*}[t]
\begin{center}
\includegraphics[width=0.96\textwidth]{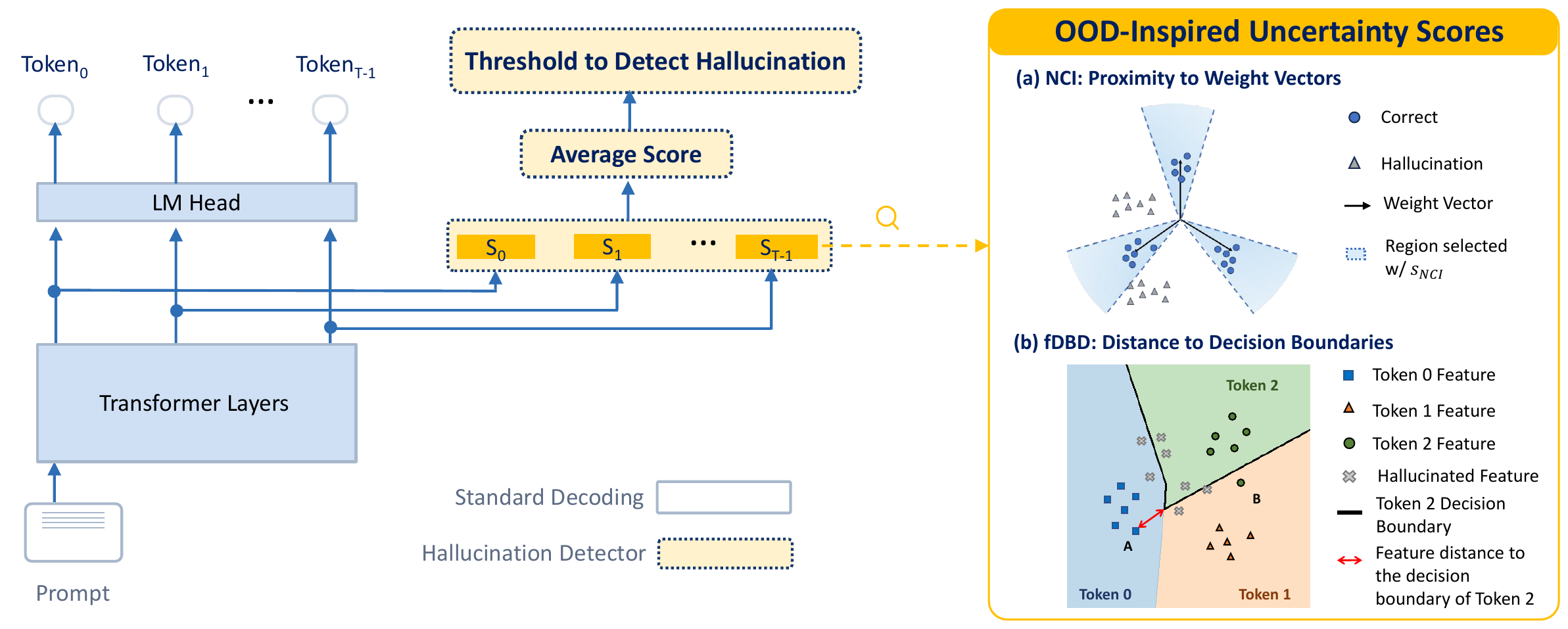}
\caption{
\textbf{Overview of OOD-Inspired Hallucination Detectors.}
At each decoding step, we compute an OOD-inspired uncertainty score,  average it across steps, and apply a threshold to detect hallucinations; higher uncertainty indicates a higher likelihood of hallucination.
We demonstrate this framework with two geometric scores adapted to LLMs: 
(a) feature proximity to class weight vectors (Definition~\ref{def:pScore}) based on the OOD detector NCI~\cite{liu2025detecting}, where lower proximity indicates higher uncertainty.
(b) distance to the decision boundary (Definition~\ref{def:uniDistanceLLM}) based on the OOD detector fDBD~\cite{liu2024fast}, where smaller distance indicates higher uncertainty.
}\label{fig:algorithm-demo}
\end{center}
\end{figure*}

A natural question therefore is: 
can we leverage the rich literature on OOD detection to address the challenge 
of hallucination detection?
Unfortunately, despite the conceptual similarities, several practical challenges arise when applying OOD detection methods to hallucination detection in LLMs.
First, many OOD detectors rely on the empirical estimation of training statistics \cite{wang2022vim,liu2025detecting,hendrycks2022scaling, lee2018simple, liu2023gen} such as the mean of penultimate-layer features. 
While these statistics effectively characterize the training distribution and enable OOD detection, directly estimating them from LLM training data is impractical. 
Modern training corpora are often proprietary, and even when accessible, their scale renders computation prohibitively expensive. 
Second, the label space expands dramatically when moving from conventional classifiers to LLMs. 
OOD scores designed for small label spaces can become noisy and less reliable, and the expanded space furthermore amplifies computational costs, particularly in multi-step reasoning tasks where expenses accumulate across decoding steps.
Third, unlike classifiers, which typically make deterministic decisions, LLMs are generative and may employ stochastic decoding. 
The inherent variance from token sampling therefore 
makes it unclear to what degree OOD scores are applicable in hallucination detection.


In this work, we take a step towards bridging the gap between OOD detection and 
hallucination detection.
Since the cumulative computational cost in applying the score at each decoding step 
is high, 
we explore the adaptation of two light-weight OOD detectors, 
NCI \cite{liu2025detecting} and fDBD \cite{liu2024fast}.
Both methods take a geometric view of uncertainty.
NCI measures the proximity of penultimate-layer feature to weight vectors of the last layer (see Figure~\ref{fig:algorithm-demo}(a)), where lower proximity indicates higher uncertainty). 
fDBD measures the distance of penultimate-layer features to the decision boundaries (see Figure~\ref{fig:algorithm-demo}(b)), where smaller distance indicates higher uncertainty. 
To adapt these methods to solve the task of hallucination detection, 
we derive an analytical proxy for the required training statistics in their 
score function - the mean of training features in this case.
In addition, for fDBD to be efficient and effective in the large label space, 
we explore the reduction of the distance computation to a selected subset of 
high-likely alternative tokens in hallucination detection. 

We further show that these methods extend naturally to sequences by averaging step-wise uncertainty scores to obtain a sequence-level hallucination detection score. 
Experiments show that the resulting 
methods perform well under both greedy and stochastic decoding.

We perform extensive experiments to validate the effectiveness of OOD-inspired 
hallucination detection across different types of reasoning tasks, model architectures and model scales, showing consistent superior performance compared to a range of baselines. 
This shows that framing hallucination detection as OOD detection provides  
a promising and scalable path toward improved language model safety.







\section{Problem Statement}

Let $f$ denote a large language model (LLM). 
Given an input prompt $\bm{x}$, let $\bm{y} = (y_1, \dots, y_T)$ be the sequence of tokens generated by the model. 
We denote the set of all factually correct responses to $\bm{x}$ as $\mathcal{Y}^*(\bm{x})$. 
We say that the model $f$ is \textit{hallucinating} if the generated response $\bm{y}$ is incorrect, i.e., $\bm{y} \notin \mathcal{Y}^*(\bm{x})$. 

During inference, the ground-truth set of correct responses $\mathcal{Y}^*(\bm{x})$ for a given prompt $\bm{x}$ is unknown. The task of hallucination detection is to construct a \textbf{binary detector} $\mathcal{I}$ that predicts whether $\bm{y} \in \mathcal{Y}^*(\bm{x})$ based on the prompt $\bm{x}$, the response $\bm{y}$, and the model $f$’s internal representations:
\begin{equation}
\label{eq:detector}
\mathcal{I}(\bm{x}, \bm{y}; f) =
\begin{cases}
1 \, (\text{Hallucination}), & \text{if } S(\bm{x}, \bm{y}; f) \le \tau, \\
0 \, (\text{Not Hallucination}), & \text{otherwise}.
\end{cases}
\end{equation}
where $S(\bm{x}, \bm{y}; f)$ is a scoring function and $\tau$ is a threshold. 
Under this formulation, a lower score $S$ typically reflects lower model certainty and therefore a higher likelihood of hallucination. 

\section{Geometry of Token Generation under a Classification View}








In this section, we adapt two geometric measures—feature proximity to weight vectors and feature distance to decision boundaries—from the out-of-distribution detection literature for token generation. 
{Both measures characterize the interplay between features and language head but were shown to exhibit different empirical strength 
across different OOD settings. 
}
We then discuss and empirically validate their connection to hallucination detection.

\subsection{Classification View}

Consider a large language model (LLM) $f$ with model dimension $d_{\text{model}}$ and vocabulary $\mathcal{V}$.
At decoding step $t$, given an input prompt $\bm{x}$ and previously generated tokens
$\bm{y}_{< t} = (\bm{y}_1, \dots, \bm{y}_{t-1})$,
the model produces a penultimate-layer feature
$\bm{z}_x^t \in \mathbb{R}^{d_{\text{model}}}$.
This representation is mapped by the language head $f_{\text{head}}$ which generates 
logits over the vocabulary $\mathcal{V}$.

Conceptually, the language head $f_{\mathrm{head}}$ can be viewed as a high-dimensional linear classifier over the vocabulary $\mathcal{V}$, whose parameters are shared across decoding steps.
Since $f_{\mathrm{head}}$ is linear, given feature $z$, during greedy decoding 
the next token to be generated is 
\begin{equation*}
    \hat{c} = \arg\max_{v \in \mathcal{V}} \bm{w}_v^\top \bm{z} + b_v.
\end{equation*}
Here, $\bm{w}_v$ and $b_v$ denote the weight vector and bias associated with token $v$.
The next token may differ from this definition under 
stochastic (non-greedy) decoding, as discussed in \ref{sec:method_stochastic}.

\subsection{Feature Proximity to Weight Vectors}

For OOD detection, \citet{liu2025detecting} propose measuring model uncertainty based on the proximity of features to class weight vectors, introducing a Neural Collapse-inspired detector termed \textbf{NCI}. 
Following their framework, we define the proximity of a feature $\bm{z}$ to the weight vectors of an LLM $f$ as the scalar projection of the weight vector $\bm{w}_{\hat{c}}$ onto the centered feature $\bm{z} - \bm{\mu}_G$, where $\hat{c}$ corresponds to the most confident token. Here, $\bm{\mu}_G$ denotes the estimated mean of training features; details on its estimation in LLM are discussed in Section~\ref{sec:mean_compute}. 

\begin{definition}[Feature Proximity to Weight Vectors]\label{def:pScore}
To adapt OOD detection setting based on NCI \citep{liu2025detecting} to hallucination detection, 
we define 
the score $s_{\mathrm{NCI}}$ measuring the proximity of a feature $\bm{z}$ on LLM $f$ as:
\begin{equation}\label{eq:NCIScore}
    \mathtt{s_{\mathrm{NCI}}}(\bm{z}) = cos(\bm{w}_{\hat{c}}, \bm{z} - \bm{\mu}_G)\|\bm{w}_{\hat{c}}\|_2, 
\end{equation}
where $cos(\bm{w}_c, \bm{z} - \bm{\mu}_G) = \frac{(\bm{z} - \bm{\mu}_G) \cdot \bm{w}_c}{\| \bm{z} - \bm{\mu}_G\|_2\|\bm{w}_c\|_2}$. 

\end{definition}

A higher $s_{\mathrm{NCI}}$ indicates closer proximity to the weight vectors (see Figure~\ref{fig:algorithm-demo}(a)), corresponding to lower uncertainty. 
We remark that computing $s_{\mathrm{NCI}}$ per step requires $O(d_{\text{model}})$ operations. 

\subsection{Feature Distance to Decision Boundaries}

As an alternative to NCI, \citet{liu2024fast} propose measuring model uncertainty based on feature 
distance to decision boundaries and introduce a fast decision-boundary-based OOD 
detector (\textbf{fDBD}). 

In the following, we also explore adapting fDBD to hallucination detection, which can show stronger 
performance in some setting as we shall show.
To this end, we define the region in the space of penultimate-layer representations,  
in which a token $c \in \mathcal{V}$ attains the maximum logit, as 
\begin{equation*}
    \mathcal{R}_{c}
    =
    \left\{
        \bm{z} \in \mathbb{R}^{d_{\text{model}}}
        \;\middle|\;
        \arg\max_{v \in \mathcal{V}}
        \bm{w}_v^\top \bm{z} + b_v
        = c
    \right\}.
\end{equation*}
We refer to this region as the \emph{decision region} of token $c$, and to its
corresponding boundary as the \emph{decision boundary} of token $c$, by analogy
with multi-class classification.
Under greedy decoding, this terminology is exact; we retain the same
nomenclature when stochastic decoding is employed.


The distance from a feature $\bm{z}$ to the decision boundary of an token $c \neq \arg\max_{v \in \mathcal{V}} \bm{w}_v^\top \bm{z} + b_v$ is:

\begin{definition}[Distance to Decision Boundary]\label{def:uniDistanceLLM}
Adapted from \cite{liu2024fast}.  
\begin{equation*}
    D_f(\bm{z}, c) = \inf_{\bm{z}' \in \mathcal{R}_{c}} \|\bm{z} - \bm{z}'\|_2.
\end{equation*}
This distance, illustrated in Figure~\ref{fig:algorithm-demo}(b), corresponds to the minimal perturbation required to change the model’s most confident token from  $\hat{c} = \arg\max_{v \in \mathcal{V}} \bm{w}_v^\top \bm{z} + b_v$ to 
$c \in \mathcal{V}, c \neq \hat{c}$.
A larger distance corresponds to lower uncertainty. 
\end{definition}


We exclude token $\hat{c}$, the most confident token, from Definition~\ref{def:uniDistanceLLM}, as the feature $\bm{z}$ already resides in the decision region of $\hat{c}$ (and the distance is zero), 
carrying no uncertainty information.
For other tokens $c \neq \hat{c}$, to avoid iteratively solving the optimization to measure the distance, we derive an efficient approximation by relaxing the decision region.

\begin{theorem}[Approximate Distance to Decision Boundary]\label{thm:1}
Adapted from \cite{liu2024fast}.
Given feature $\bm{z}$ and token $c \in \mathcal{V}, c \neq  \arg\max_{v \in \mathcal{V}} \bm{w}_v^\top \bm{z} + b_v$, 
$D_f(\bm{z}, c)$
is lower bounded by
\begin{equation}
\label{eq:closedForm}
\tilde{D}_f(\bm{z}, c)
\coloneqq
\frac{
\left|
(\bm{w}_{\hat{c}} - \bm{w}_c)^\top \bm{z}
+ (b_{\hat{c}} - b_c)
\right|
}{
\left\| \bm{w}_{\hat{c}} - \bm{w}_c \right\|_2
}.
\end{equation}
See proof in Appendix~\ref{sec:appendix_proof}.
\end{theorem}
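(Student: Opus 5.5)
The plan is to relax the decision region $\mathcal{R}_c$ to a half-space and then use the closed-form distance from a point to a hyperplane. Write $\hat{c} = \arg\max_{v \in \mathcal{V}} \bm{w}_v^\top \bm{z} + b_v$. Any $\bm{z}' \in \mathcal{R}_c$ has token $c$ attaining the maximum logit, so in particular $\bm{w}_c^\top \bm{z}' + b_c \ge \bm{w}_{\hat{c}}^\top \bm{z}' + b_{\hat{c}}$. Hence
\begin{equation*}
\mathcal{R}_c \subseteq \mathcal{H}_c \coloneqq \left\{ \bm{z}' \in \mathbb{R}^{d_{\text{model}}} \;\middle|\; (\bm{w}_{\hat{c}} - \bm{w}_c)^\top \bm{z}' + (b_{\hat{c}} - b_c) \le 0 \right\}.
\end{equation*}
An infimum over a larger set can only be smaller, so $D_f(\bm{z}, c) = \inf_{\bm{z}' \in \mathcal{R}_c} \|\bm{z} - \bm{z}'\|_2 \ge \inf_{\bm{z}' \in \mathcal{H}_c} \|\bm{z} - \bm{z}'\|_2$. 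If $\mathcal{R}_c$ happens to be empty, the infimum is $+\infty$ and the bound holds trivially.

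Next I will compute the distance from $\bm{z}$ to the half-space $\mathcal{H}_c$. Let $g(\bm{z}') = (\bm{w}_{\hat{c}} - \bm{w}_c)^\top \bm{z}' + (b_{\hat{c}} - b_c)$. Since $\hat{c}$ is the argmax at $\bm{z}$, we have $g(\bm{z}) \ge 0$. For any $\bm{z}' \in \mathcal{H}_c$, Cauchy--Schwarz gives
\begin{equation*}
g(\bm{z}) \le g(\bm{z}) - g(\bm{z}') = (\bm{w}_{\hat{c}} - \bm{w}_c)^\top (\bm{z} - \bm{z}') \le \|\bm{w}_{\hat{c}} - \bm{w}_c\|_2 \, \|\bm{z} - \bm{z}'\|_2 .
\end{equation*}
Dividing through yields $\|\bm{z} - \bm{z}'\|_2 \ge |g(\bm{z})| / \|\bm{w}_{\hat{c}} - \bm{w}_c\|_2 = \tilde{D}_f(\bm{z}, c)$. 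Taking the infimum over $\bm{z}' \in \mathcal{H}_c$, and then using the containment $\mathcal{R}_c \subseteq \mathcal{H}_c$, gives the claim. This is in fact equality for the half-space, attained by projecting orthogonally onto the hyperplane, but only the inequality is needed.

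The one delicate point is the division by $\|\bm{w}_{\hat{c}} - \bm{w}_c\|_2$. I will note that if $\bm{w}_{\hat{c}} = \bm{w}_c$, the expression in \eqref{eq:closedForm} is undefined. In that degenerate case the two logits differ by a constant, so either $\mathcal{R}_c$ is empty or ties make the argmax ill-defined, and I will treat it as excluded (or interpret $\tilde{D}_f = +\infty$ when $b_{\hat{c}} > b_c$, which is consistent with $\mathcal{R}_c = \emptyset$). Ties in the argmax only enlarge $\mathcal{R}_c$ within $\mathcal{H}_c$, so they do not affect the argument. Beyond this bookkeeping, no real obstacle is expected.
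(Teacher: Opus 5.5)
Your proof is correct and follows essentially the same route as the paper: relax $\mathcal{R}_c$ to the half-space in which token $c$'s logit is at least that of $\hat{c}$, use that an infimum over a larger set can only be smaller, and evaluate the distance from $\bm{z}$ to the hyperplane $(\bm{w}_{\hat{c}} - \bm{w}_c)^\top \bm{z}' + (b_{\hat{c}} - b_c) = 0$. The differences are minor. You use the closed rather than the open half-space, and you derive the point-to-hyperplane bound via Cauchy--Schwarz where the paper simply quotes the formula. You also treat the degenerate case $\bm{w}_{\hat{c}} = \bm{w}_c$, which the paper leaves implicit. Finally, the paper's extra tightness argument for the nearest boundary is not needed for the stated lower bound.
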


For each token $c$, computing the distance requires $O(d_{\text{model}})$ operations: $O(1)$ for the logit difference in the numerator and $O(d_{\text{model}})$ for the norm computation in the denominator.


\begin{figure}[t]
    \centering
    \includegraphics[width=\linewidth]{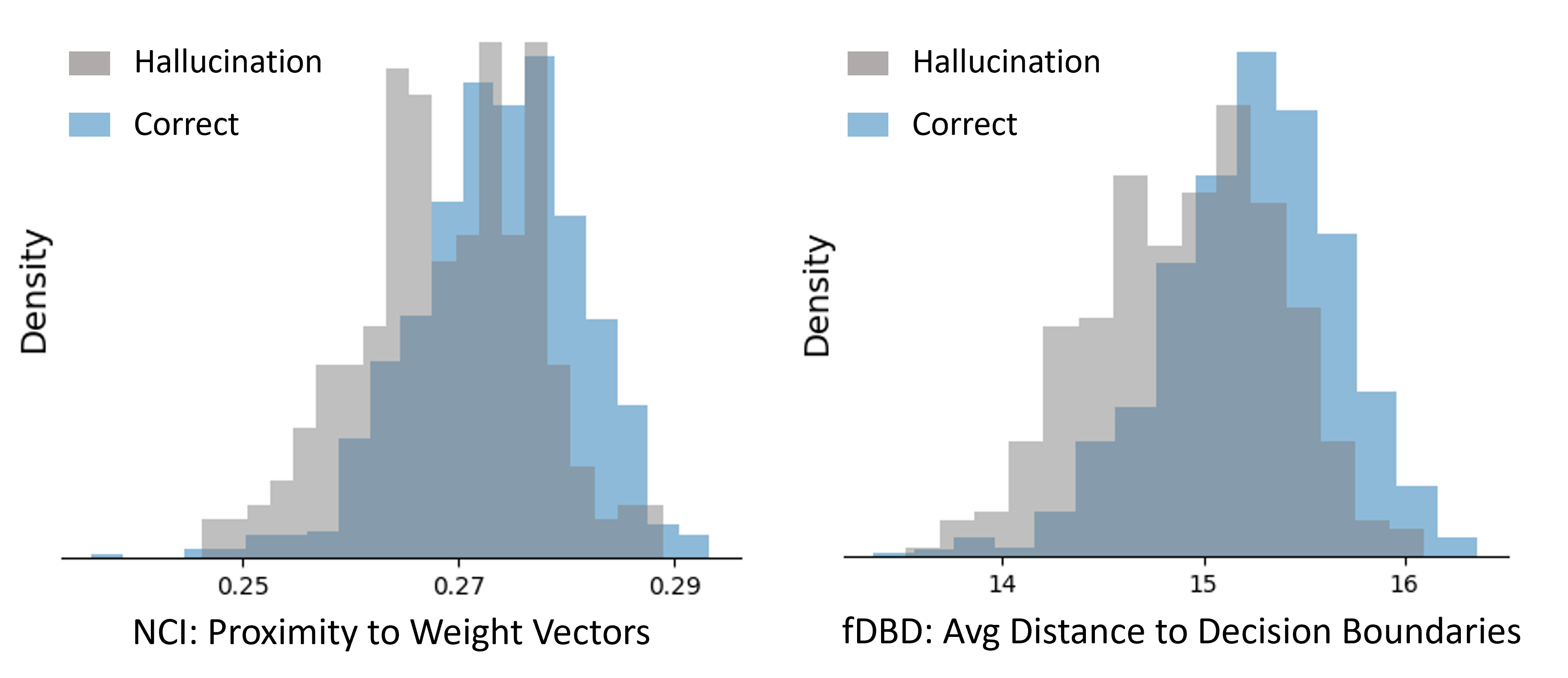}
    \caption{\textbf{OOD-inspired uncertainty measures signal hallucinations.}
    \emph{Left:} Features from hallucinated responses exhibit lower proximity to weight vectors, extending OOD detector NCI \cite{liu2025detecting}.
    \emph{Right:} Features from hallucinated responses exhibit smaller distance to decision boundaries, extending OOD detector {fDBD} \cite{liu2024fast}. 
     Experiments on the CSQA dataset with \texttt{Llama-3.2-3B-Instruct}. 
    }
    \label{fig:score_histogram}
\end{figure}

\subsection{Geometric Uncertainty Signals Hallucination}
We empirically validate these two uncertainty measures in the context of hallucination detection 
in Figure~\ref{fig:score_histogram}. 
Their adaptation to LLMs is described in Section~\ref{sec:mean_compute}.

As shown in Figure~\ref{fig:score_histogram}~\emph{Left}, features from hallucinated responses exhibit lower proximity to their corresponding weight vectors compared to correct responses, validating the effectiveness of \textbf{NCI} for hallucination detection. 
Furthermore, Figure~\ref{fig:score_histogram}~\emph{Right} shows that penultimate-layer features from hallucinated responses lie closer to decision boundaries than those from correct answers. This observation suggests that the uncertainty signal derived from the \textbf{fDBD} detector remains a robust indicator of hallucination.

This geometric interpretation is consistent with our prior work~\citep{liu2026enhancing}, which shows that hallucinated responses are more sensitive than correct responses to intermediate-layer noise injection. 
Rather than relying on noise injection to implicitly probe boundary proximity through changes in model outputs, our OOD-inspired \textbf{fDBD} score explicitly quantifies distance to the decision boundary at the penultimate layer.

Overall, the empirical separation observed for both \textbf{NCI} and \textbf{fDBD} suggests that uncertainty measures from OOD detection transfer well to the hallucination detection setting. 
We next discuss the design of hallucination detectors that leverage and adapt these geometric properties.

\section{From OOD to Hallucination Detection}\label{sec:method}


While both OOD detection and hallucination detection are fundamentally linked to model uncertainty, the two problems differ significantly in scale and mechanism. 
In this section, we describe how we bridge these challenges and adapt methods originally developed for OOD detection to the context of hallucination detection.

\subsection{Case Study Setups}

In this case study, we examine hallucination detection on CSQA
\citep{talmor-etal-2019-commonsenseqa} using
\texttt{Llama-3.2-3B-Instruct} \citep{grattafiori2024llama}.
CSQA evaluates commonsense reasoning in a multiple-choice format, and we report
results on the validation set, which contains 1{,}319 questions.
We prompt the model using in-context learning examples following
\citet{wei2022chain}.
Details of the prompting strategy and answer extraction procedure are provided in Appendix~\ref{app:implementation}.
Unless otherwise specified (Section~\ref{sec:method_stochastic}), we use greedy decoding (i.e., $temp = 0$).
Following \citet{chen2024inside, kuhn2023semantic, lin2023generating,
lin2022towards, malinin2020uncertainty}, we quantify hallucination detection performance using the threshold-free metric area under the receiver operating characteristic curve (AUROC), where higher values indicate better performance.
As a baseline, we consider the model’s Perplexity~\cite{renout}, defined as
\begin{equation*}
\mathrm{PPL}(\bm{y}\mid \bm{x})
=
\exp\left(
-\frac{1}{T}
\sum_{t=1}^{T}
\log p(\bm{y}_t \mid \bm{x}, \bm{y}_{< t})
\right).
\end{equation*}
Lower perplexity corresponds to higher model confidence in the generated output.
{To ensure consistency with Equation~\ref{eq:detector}, we negate the perplexity score as the detection score.}


\subsection{Challenge I: Estimating Training Statistics at Scale}\label{sec:mean_compute}

In OOD detection, many methods rely on statistics estimated from training features, such as the empirical mean or low-rank structure (e.g., SVD). 
For large language models, however, computing such statistics over the full training corpus is infeasible due to its sheer scale. Even estimating them from a sampled subset is challenging: the training data are extremely diverse, and a limited subset can easily introduce bias into the estimation.
As a result, purely data-driven estimates of training statistics become neither practical nor reliable in this setting. This motivates the need for \emph{analytical, model-intrinsic estimations} of training statistics that do not require access to the training data.

In the following, we take the estimation of the training feature mean $\mu_G$ in the 
\textbf{NCI} score (Definition~\ref{def:pScore}) as a step toward bridging this gap. 
Since the language head $f_{\mathrm{head}}$ is trained against the penultimate representations $z$, its parameters implicitly reflect the geometry of the training feature distribution.
{Specifically, the geometry of well-trained classifiers underlying NCI suggests that $\bm{\mu}_G$ lies near the maximal-uncertainty point.} 
We thus adopt this point as an analytical proxy for $\bm{\mu}_G$ in the computation of NCI.
Formally defining the proxy point as the \emph{Decision-Neutral Closest Point}, obtained by minimizing the variance of logits across the vocabulary.

\begin{lemma}[Analytical Solution for Decision-Neutral Closest Point]
Let $W \in \mathbb{R}^{|\mathcal{V}| \times d_{\text{model}}}$ be the weight matrix whose rows are $\bm{w}_v^\top$, and let $\bm{b} \in \mathbb{R}^{|\mathcal{V}|}$ be the vector of biases $b_v$. 
The point $\bm{z}^\star$ that minimizes logit variance across $\mathcal{V}$ is given by:
\begin{equation}\label{eq:analytical_proxy}
    \hat{\bm{z}}_\star = -(W^\top P W)^\dagger W^\top P \bm{b}
\end{equation}
where $P = I - \frac{1}{|\mathcal{V}|}\mathbf{1}\mathbf{1}^\top$ with $I$ denoting the identity matrix, $\mathbf{1}$ the all-ones vector, and $(\cdot)^{\dagger}$ the Moore–Penrose pseudo-inverse.
\end{lemma}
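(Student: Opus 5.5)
The plan is to write the logit variance as a convex quadratic in $\bm{z}$ and solve the normal equations. The logit vector at $\bm{z}$ is $\bm{\ell}(\bm{z}) = W\bm{z} + \bm{b} \in \mathbb{R}^{|\mathcal{V}|}$. Its empirical variance across the vocabulary is
\begin{equation*}
\mathrm{Var}(\bm{z}) = \frac{1}{|\mathcal{V}|}\sum_{v}\Bigl(\ell_v - \tfrac{1}{|\mathcal{V}|}\mathbf{1}^\top \bm{\ell}\Bigr)^2 = \frac{1}{|\mathcal{V}|}\|P(W\bm{z}+\bm{b})\|_2^2 .
\end{equation*}
This uses the fact that $P = I - \frac{1}{|\mathcal{V}|}\mathbf{1}\mathbf{1}^\top$ is the orthogonal projector onto $\mathbf{1}^\perp$, which subtracts the mean. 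Since $P$ is symmetric and idempotent, the objective equals $\frac{1}{|\mathcal{V}|}(W\bm{z}+\bm{b})^\top P (W\bm{z}+\bm{b})$. This is a convex quadratic in $\bm{z}$ with Hessian proportional to $W^\top P W \succeq 0$.

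The problem is therefore the least-squares problem $\min_{\bm{z}} \|PW\bm{z} - (-P\bm{b})\|_2^2$, with design matrix $A = PW$ and target $-P\bm{b}$. By convexity, minimizers are exactly the solutions of the first-order condition. Using $P^\top P = P$, this reads
\begin{equation*}
W^\top P W \bm{z} = -W^\top P \bm{b}.
\end{equation*}
These normal equations are always consistent, because $W^\top P\bm{b} = A^\top(P\bm{b}) \in \mathrm{range}(A^\top) = \mathrm{range}(A^\top A) = \mathrm{range}(W^\top P W)$. The standard pseudo-inverse fact $M M^\dagger \bm{y} = \bm{y}$ for $\bm{y}\in\mathrm{range}(M)$ then shows that $\hat{\bm{z}}_\star = -(W^\top P W)^\dagger W^\top P \bm{b}$ satisfies the equation. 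Equivalently, one can note $\hat{\bm{z}}_\star = -A^\dagger P\bm{b}$, using the identity $(A^\top A)^\dagger A^\top = A^\dagger$.

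The main subtlety, and the step I expect to need the most care, is non-uniqueness. $W^\top P W$ may be singular, for example when $d_{\text{model}}$ exceeds the rank of the centered head, or when some direction shifts all logits equally. In that case the minimizer set is the affine space $\hat{\bm{z}}_\star + \ker(W^\top P W)$. I would handle this by noting that $\hat{\bm{z}}_\star \in \mathrm{range}((W^\top P W)^\dagger) = \ker(W^\top P W)^\perp$, so it is the minimum-norm minimizer. This is the sense of ``closest point'' (closest to the origin) in the name \emph{Decision-Neutral Closest Point}, and it justifies the definite article in the statement. If $W^\top P W$ is invertible, the pseudo-inverse reduces to the inverse and the minimizer is unique.
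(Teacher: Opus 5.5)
Your proposal is correct and follows the same route as the paper's appendix derivation: rewrite the objective as $\|P(W\bm{z}+\bm{b})\|_2^2$, set the gradient $2W^\top P(W\bm{z}+\bm{b})$ to zero, and read off the pseudo-inverse solution. The paper leaves implicit the convexity, consistency-of-the-normal-equations, and minimum-norm arguments you add. One caveat is that the paper's word ``closest'' refers to the least-squares relaxation of exact decision-neutrality (all logits equal), not to closeness to the origin.
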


The formal definition of the optimization objective of Decision-Neutral Point and the detailed derivation are provided in Appendix~\ref{app:analytical_mean}.

We empirically evaluate the effectiveness of this analytical proxy for hallucination detection 
on CSQA using \texttt{Llama-3.2-3B-Instruct} (Table~\ref{tab:nci_analytcal}).
Notably, the model employs a zero-bias language head ($\bm{b} = \bm{0} \in \mathbb{R}^{|\mathcal{V}|}$), implying that the analytical estimate of the mean by Equation~\ref{eq:analytical_proxy} reduces to the origin of the feature space, $\bm{0} \in \mathbb{R}^{d_{\mathrm{model}}}$.
For comparison, we also consider an empirical estimate of the training feature mean computed from the CSQA training set.
Both the analytical proxy and the empirical estimate are incorporated into the step-wise NCI score (Equation~\ref{eq:NCIScore}).
The final hallucination decision obtained by thresholding the average score across an output sequence of length $T$, i.e., 
$$
S_\mathrm{NCI} = \frac{1}{T}\sum_{t = 0}^{T - 1} s_\mathrm{NCI}(\bm{z}^t).
$$
We further compare these variants against the Perplexity baseline, with results reported in Table~\ref{tab:nci_analytcal}.

As shown in Table~\ref{tab:nci_analytcal}, incorporating the analytical proxy of $\bm{\mu_G}$ into the NCI score enables effective hallucination detection, outperforming both the baseline Perplexity and the NCI variant using the empirical mean. This demonstrates the effectiveness of analytically estimating training statistics for hallucination detection.

\begin{table}[t]
\centering
\caption{\textbf{Analytical Proxy of Feature Mean Enables Effect Hallucination Detection.}
NCI with the analytical proxy outperforms the empirical estimation variant and Perplexity.
AUROC reported for CSQA on \texttt{Llama-3.2-3B-Instruct}, with higher values indicating better performance; the best result is shown in \textbf{bold}. 
}
\begin{tabular}{Z{0.5} Z{0.25}}
\toprule
\textbf{Method} & \textbf{AUROC} \\
\midrule
Perplexity & 63.23 \\
NCI w/ empirical estimation &  62.79 \\
NCI w/ analytical proxy     & \textbf{66.07} \\
\bottomrule
\end{tabular}
\label{tab:nci_analytcal}
\end{table}






\subsection{Challenge II: Effectiveness and Efficiency in Massive Vocabulary Space}

Another challenge in extending OOD detectors to hallucination detection lies in the dramatically increased label space.
While conventional classifiers typically involve on the order of thousands of classes at most, large language models operate over vocabularies containing hundreds of thousands of tokens.
This scale poses challenges for directly applying OOD detectors such 
as \textbf{fDBD}, which indiscriminately aggregates signals from the entire label space.

Specifically, a direct adaptation of the \textbf{fDBD} detector to hallucination detection would compute the normalized average distance to the decision boundary over all tokens in the vocabulary:
\begin{equation}
\label{eq:regDBScore_org}
s_{\mathrm{fDBD}}
\coloneqq
\frac{1}{|\mathcal{V}| - 1}
\sum_{c \in \mathcal{V}, c \neq \hat{c}}
\frac{\tilde{D}_f(\bm{z}^t, c)}{\|\bm{z}^t - \bm{\mu}_G\|_2},
\end{equation}
where $\tilde{D}_f(\bm{z}^t, c)$ is as defined in Definition~\ref{def:uniDistanceLLM}. 
This formulation raises both effectiveness and efficiency concerns in the context of large vocabularies.

Intuitively, distances to the boundaries of low-probability tokens—such as irrelevant symbols, rare tokens, or numbers during plain-text reasoning—tend to be consistently large.
These distant boundaries may offer less informative uncertainty signals and risk diluting the more meaningful distances associated with semantically plausible alternative tokens.
Furthermore, Equation~\ref{eq:regDBScore_org} scales linearly with vocabulary size, taking $O(d_{model}|\mathcal{V}|)$ computes per step;
while feasible, such overhead is less than ideal. 

\begin{table}[t]
\centering
\caption{ 
\textbf{Controlling the alternative set size $k$ improves performance of fDBD.}
AUROC on CSQA using \texttt{Llama-3.2-3B-Instruct} (higher is better).
The best-performing $k$ and the corresponding AUROC are highlighted in \textbf{bold}.
While the method is effective across all choices of $k$, peak performance is achieved at $k=1\,000$.
}
\begin{tabular}{Z{0.32} Z{0.32}}
\toprule
$k$ & \textbf{AUROC} \\
\midrule
$1$       & 68.64 \\
$10$      & 68.76 \\
$100$     & 69.18 \\
$\textbf{1\,000}$    &  \textbf{69.24} \\
$10\,000$   & 68.87 \\
$100\,000$  & 68.26 \\
All  & 68.15 \\
\bottomrule
\end{tabular}
\label{tab:auroc_vs_k}
\end{table}


\begin{table*}[!t]
\centering
\caption{\textbf{OOD-inspired hallucination detectors, NCI and fDBD, retain strong performance under stochastic decoding}, significantly outperforming the baseline Perplexity.
Results on CSQA using \texttt{Llama-3.2-3B-Instruct}.
AUROC is reported (higher is better).
For each temperature, mean $\pm$ standard deviation is computed over five random seeds.
}\label{tab:stochastic_result}
\begin{tabular}{Z{0.25} Z{0.15}Z{0.15}Z{0.15}Z{0.15}}
\toprule
\textbf{Method} & \textbf{temp = 0.2} & \textbf{temp = 0.5} & \textbf{temp = 0.8} & \textbf{temp = 1.0} \\ 
\midrule
Perplexity            & $63.49 \pm 0.75$ & $62.08 \pm 2.34$ & $63.45 \pm 0.71$ & $62.68 \pm 1.08$ \\
NCI             & $67.07 \pm 0.53$ & $66.04 \pm 1.59$ & $67.53 \pm 0.88$ & $67.93 \pm 1.65$ \\
fDBD            & $69.30 \pm 0.56$ & $68.19 \pm 1.47$ & $69.12 \pm 0.56$ & $69.19 \pm 1.98$ \\ \bottomrule
\end{tabular}
\end{table*}

Motivated by these, we restrict the distance computation to a small set of confident alternative tokens. 
Specifically, we select $k$ tokens with the highest logits at step $t$, excluding the top-ranked token $\hat{c}$ (which is omitted as the feature $\bm{z}^t$ already resides within its decision region).
And we compute the average normalized distance to the decision boundaries only with in the top-k set $\mathcal{K}_t$: 
\begin{equation}
\label{eq:regDBScore}
    s_{\mathrm{fDBD}}^k \coloneqq \frac{1}{k} \sum_{c \in \mathcal{K}_t} \frac{\tilde{D}_f(\bm{z}^t, c)}{\|\bm{z}^t - \bm{\mu}_G\|_2}.
\end{equation}
This discards the potentially noisy uncertainty scores from low-probability tokens, while reducing the expected complexity at each step from {$O(d_{model}|\mathcal{V}|)$ to $O(d_{model}k + |\mathcal{V}|)$} \footnote{\texttt{Quickselect}\cite{hoare1961algorithm} enables selecting the top-$k$ tokens (unordered) in $O(|\mathcal{V}|)$ expected time.}. 
In practice, $k$ can be chosen from the validation set as discussed in Section~\ref{sec:experiments}.

Empirically, controlling alternative set size gives benefit, as shown with CSQA on \texttt{Llama-3.2-3B} in Table~\ref{tab:auroc_vs_k}
In these experiments, we use the analytical estimate $\bm{z}^\star$ for $\bm{\mu}_G$ as described in Section~\ref{sec:mean_compute} and detect hallucination by thresholding 
the average score across $T$ decoding steps for the output sequence of 
length $T$,
\begin{equation}
    S_{\mathrm{fDBD}}^k = \frac{1}{T} \sum_{t=1}^T s_{\mathrm{fDBD}}^k (\bm{z}^t),
\end{equation}

Specifically, we consider $k = \mathrm{All}$, which includes all alternative tokens, as well as $k \in \{1, 10, 100, 1\,000, 10\,000, 100\,000\}$. 
As measured by AUROC (higher is better), the method remains effective across all choices of $k$ consistently outperforming the Perplexity baseline (AUROC = 63.23). 
Performance peaks at $k = \text{1\,000}$, suggesting that appropriately controlling the size of the alternative token set yields additional performance gains.

\subsection{Challenge III: Robustness to Stochastic Generation}\label{sec:method_stochastic}

While conventional classifiers produce deterministic predictions, large language models can generate outputs either deterministically (e.g., via greedy decoding) or stochastically.
So far, our experiments focus on greedy decoding, aligning with the deterministic setting of conventional classification problems.
Under stochastic decoding, however, the generated token at each step may differ from the token with the highest logit.
This poses a potential challenge for our hallucination detectors \textbf{NCI} and \textbf{fDBD}, as both are adapted from OOD detectors and are defined with respect to the highest-logit token.
Specifically, NCI measures the proximity of the hidden representation to the weight vector corresponding to the highest logit, while fDBD treats the representation as residing within the decision region of that token of max logit—a notion that may become less precise under stochastic decoding.

However, we hypothesize that both metrics would ``catch up” over the course of a {reasoning task} and retain strong performance on a sequence level. 
The geometric uncertainty measures in NCI and fDBD capture the model’s internal uncertainty at each decoding state.
{When the generated token aligns with this internal uncertainty—low-probability tokens in uncertain states or high-probability tokens in certain states—the measures accurately reflect per-step generation confidence.
Misalignment can occur under stochastic decoding, for example, when sampling produces a low-probability token in an otherwise certain state (the converse case is practically infeasible)}.
Such mismatches are typically short-lived: the model tends to transition into genuinely uncertain states in subsequent steps, allowing the uncertainty signal to quickly recover and remain reliable over the remainder of the sequence.

Table~\ref{tab:stochastic_result} validates our hypothesis and shows strong performance of NCI and fDBD under stochastic decoding.
We evaluate performance across a range of standard decoding temperatures $temp \in \{ 0.2, 0.5, 0.8, 1.0 \}$. 
For each temperature setting, we conduct five independent runs using different random seeds and report the mean AUROC along with its standard deviation.
For simplicity, we set alternative set size $k = \mathrm{All}$.
As observed in the table, NCI and fDBD maintain strong performance across all temperatures and consistently outperform the baseline Perplexity. 

We further validate our hypothesis in Appendix~\ref{app:nci_decoded} by considering a decoded-token-based framework as an alternative. 
While fDBD does not readily extend to this setting, NCI adapts naturally by measuring feature proximity to the decoded token’s weight vector. Empirically, this variant achieves performance comparable to the original NCI, confirming that imprecise alignment during stochastic decoding does not degrade sequence-level performance.

\section{Experiments}\label{sec:experiments}

\begin{table*}[!t]
\centering
\caption{\textbf{OOD-inspired hallucination detectors NCI and fDBD achieve superior performance with minimal latency overhead.} 
}

\begin{subtable}{\textwidth}
\centering
\caption{\textbf{OOD-inspired hallucination detectors NCI and fDBD achieve superior performance across different datasets and model architectures.}
Our methods, NCI, fDBD with $k$ set to ALL, and fDBD with $k$ selected on the validation set, are highlighted in shading.
``Single Sample'' indicates whether the method requires multiple samples for hallucination detection.
All methods are \emph{training-free}.
Performance is reported in AUROC (higher is better).
Best performance is shown in \textbf{bold}, second best is \underline{underlined}.
}
\label{tab:hallucination_results_main}

\begin{tabular}{Z{0.18} Z{0.21} Z{0.13} Z{0.1} Z{0.1} Z{0.1}}
\toprule
\textbf{Model} & \textbf{Method} & \textbf{Single Sample} & \textbf{CSQA} & \textbf{GSM8K} & \textbf{AQuA} \\
\midrule
\multirow{12}{*}{Llama-3.2-3B-Instruct}
& Perplexity        & \bccheck & 63.23 & 69.63 & 72.85 \\
& Predictive Probability        & \bccheck & 61.63 & 70.88 & 69.07 \\
& LN Predictive Probability     & \bccheck & 61.51 & 70.68 & 68.98 \\
& Max P             & \bccheck & 66.01 & 73.90 & 66.02 \\
& P(True)          & \bccheck & 47.73 & 51.02 & 39.38 \\
& CoE-R           & \bccheck & 47.06 & 50.12 & 45.55 \\
& CoE-C           & \bccheck & 58.82 & 60.69 & 62.56 \\
& Lexical Similarity       & \bccross     & 62.94 & 73.66 & 71.48 \\
& SelfCheckGPT NLI  & \bccross     & 64.18 & 74.29 & 66.01 \\
& Semantic Entropy & \bccross  & 60.61 & 64.40 & 64.71 \\
\rowcolor{lightgray}
& \textbf{NCI}     & \bccheck & 66.07 & \underline{76.32} & 74.41 \\
\rowcolor{lightgray}
& \textbf{fDBD}     & \bccheck & \underline{68.15} & {75.59} & \underline{75.80} \\
\rowcolor{lightgray}
& \textbf{fDBD (selected k) }     & \bccheck & \textbf{69.24} & \textbf{76.36} & \textbf{76.20} \\  
\midrule 
\multirow{12}{*}{Qwen-2.5-7B-Instruct}
& Perplexity        & \bccheck & 61.94 & 71.54 & 71.66 \\
& Predictive Probability        & \bccheck & 64.91 & 73.29 & 73.37 \\
& LN Predictive Probability     & \bccheck & 65.19 & 73.01 & 74.17 \\
& Max P             & \bccheck & 49.90 & 50.00 & 50.83 \\
& P(True)          & \bccheck & 68.01 & 70.31 & 72.86 \\
& CoE-R           & \bccheck & 62.75 & 75.13 & 72.13 \\
& CoE-C           & \bccheck & 66.89 & 75.50 & 72.04 \\
& Lexical Similarity       & \bccross     & 60.57 & 72.02 & 72.62 \\
& SelfCheckGPT NLI  & \bccross     & 60.18 & 76.22 & 70.90 \\
& Semantic Entropy  & \bccross  & 59.10 & 66.83 & 69.62 \\
\rowcolor{lightgray}
& \textbf{NCI}     & \bccheck & \underline{71.60} & 75.83 & \underline{78.19} \\
\rowcolor{lightgray}
& \textbf{fDBD}     & \bccheck & {71.50} & \underline{77.19} & {77.08} \\
\rowcolor{lightgray}
& \textbf{fDBD (selected k)}     & \bccheck & \textbf{72.47} & \textbf{77.19} & \textbf{78.22} \\
\bottomrule
\end{tabular}
\end{subtable}

\vspace{0.5em}

\begin{subtable}{\textwidth}
\centering
\caption{\textbf{OOD-inspired hallucination detectors NCI and fDBD incur minimal latency overhead.}
End-to-end inference latency (ms/token) on CSQA with \texttt{Llama-3.2-3B-Instruct} ($k=1000$ for fDBD, as selected in Table~\ref{tab:hallucination_results_main}). Lower is better. 
}
\label{tab:latency}

\begin{tabular}{ccccc}
\toprule
& \textbf{Standard} & \textbf{Perplexity} & \textbf{NCI} & \textbf{fDBD} \\
\midrule
Latency (ms/token) & 31.94 & 32.88 & 32.54 & 32.71 \\
\bottomrule
\end{tabular}
\end{subtable}

\end{table*}
\vspace{-0.5em}

In this section, we evaluate the hallucination detectors NCI and fDBD across additional datasets and model architectures to assess their generalization.
We compare our methods against a broader set of baselines, where NCI and fDBD consistently achieve superior performance.
We further analyze the sensitivity of the hyperparameter $k$, which controls the size of the alternative token set in fDBD.



\subsection{Main Results}\label{sec:main_result}

\paragraph{Datasets} 
In addition to the commonsense reasoning task CSQA, we evaluate on two
mathematical reasoning benchmarks: 
GSM8K \cite{cobbe2021training}, which requires free-form numerical answers, 
and AQuA \cite{ling2017program}, which uses a multiple-choice format (see Appendix~\ref{app:implementation} for examples).
We evaluate on the CSQA validation set (1{,}221 questions), the GSM8K test set (1{,}319 questions), and the AQuA validation split (254 questions).
For each dataset, hyperparameter $k$, which controls the alternative set size, is selected with training subset.  

\paragraph{Models} We extend the analysis from \texttt{Llama-3.2-3B-Instruct} (Section~\ref{sec:method}) to
\texttt{Qwen-2.5-7B-Instruct} \citep{qwen2025qwen25technicalreport} to assess
generality across model families.
In addition, we provide extensive validation of our methods in the appendices, including scalability to larger models with \texttt{Qwen-3-32B}~\citep{yang2025qwen3technicalreport} (Appendix~\ref{app:large_model}), generalizability across model sizes within the same family (Appendix~\ref{app:model_scales}), generalizability to non-instruction-tuned base models (Appendix~\ref{app:base_models}), generalizability to MoE models (Appendix~\ref{app:moe_models}), and generalizability to alternative architectures (Appendix~\ref{app:alternative_architectures}).

\begin{table*}[!t]
\centering
\caption{\textbf{OOD inspired hallucination detector fDBD exhibits low sensitivity to the choice of $k$, which controls the size of the alternative token set.}
AUROC is reported. 
Experiments are conducted on \texttt{Llama-3.2-3B-Instruct}. 
The selected values of $k$ are 1\,000 for CSQA and 100 for GSM8K and AQuA.}
\label{tab:k_stability}
\begin{tabular}{Z{0.1} Z{0.08}Z{0.08}Z{0.08} Z{0.10} Z{0.08}Z{0.08}Z{0.08}}
\toprule
\textbf{Dataset} & \multicolumn{7}{c}{\textbf{Variation in $k$}} \\
\cmidrule(lr){2-8}
& $-20\%$ & $-10\%$ & $-5\%$ & \textbf{Selected ($k$)} & $+5\%$ & $+10\%$ & $+20\%$ \\
\midrule
CSQA & 69.25 & 69.24 & 69.24& \textbf{69.24} & 69.23 & 69.23 & 69.22 \\
GSM8K & 76.41 & 76.38 & 76.36 & \textbf{76.36} & 76.34 & 76.34 & 76.32 \\
AQuA  & 76.34 & 76.26 & 76.26 & \textbf{76.20} & 76.21 & 76.23 & 76.15 \\
\bottomrule
\end{tabular}
\end{table*}

\paragraph{Baselines}

We compare our method against \emph{training-free} hallucination detectors that operate on a \emph{single sample}.
In addition to Perplexity (Section~\ref{sec:method}), we consider common baselines: Predictive Probability, i.e., $\prod_{t=1}^T p(\bm{y}_t \mid \bm{x}, \bm{y}_{< t})$, Length-Normalized (LN) Predictive Probability, which normalizes predictive probability by output sequence length; Perplexity \cite{si2022prompting}, which measures model predictive uncertainty; Maximum Softmax Probability (Max P) \cite{wang2024latent}; and verbalized method P(True)~\cite{kadavath2022language}, which queries model confidence through an auxiliary prompt verbatim, as in the original paper.
We also consider latent-space Chain-of-Embeddings (CoE) methods \citep{wang2024latent}, including CoE-R and CoE-C.

In addition, we compare against baselines that detect hallucinations by measuring consistency across \emph{multiple samples}.
Specifically, we compare against Lexical Similarity \citep{lin2022towards, lin2023generating}, which measures consistency using ROUGE-L scores; SelfCheckGPT-NLI \citep{manakul2023selfcheckgpt}, which measures discrepancies via contradiction scores from natural language inference; and Semantic Entropy \citep{semanticentropy}, which clusters similar texts and then computes entropy across clusters.
In our experiments, we set the temperature to $0.5$ and the sample size to three.
We consider different sample size in Appendix~\ref{app:selfcheck_sample_size}.



\paragraph{Effectiveness}
Along with the baselines, we evaluate our hallucination detectors NCI and fDBD. 
All experiments are conducted under greedy decoding ($T = 0$).
AUROC is reported, where higher values indicate better performance.
We note that AUROC is conventionally reported in the range $[50, 100]$; following prior work \cite{wang2024latent}, we allow values outside this range when hallucinated samples receive higher confidence scores than non-hallucinated ones, to facilitate comparison across methods. 

For fDBD, we consider two variants: the hyperparameter-free version that considers distances to all decision boundaries, and a version that computes distances only to the top $k$ candidate tokens, with $k \in \{1, 10, 100, 1\,000, 10\,000, 100\,000, \mathrm{All}\}$. 
Based on validation results, we select $k = 1\,000, 100, 100$ on \texttt{Llama-3.2-3B-Instruct} for CSQA, GSM8K, and AQuA respectively; and $k = 100, \mathrm{All}, 100$ on \texttt{Qwen-2.5-7B-Instruct} for CSQA, GSM8K, and AQuA respectively. 

Examining the results, fDBD with a selected subset of size $k$ almost always outperforms the full fDBD variant, confirming the benefit of controlling the size of the alternative token set. 
In most cases, fDBD also outperforms NCI. This reflects the efficiency–effectiveness trade-off between the two methods: NCI has per-step complexity $O(d_{\mathrm{model}})$, making it computationally cheaper than fDBD, which has complexity $O(d_{\mathrm{model}}|\mathcal{V}|)$ in its full form and $O(d_{\mathrm{model}}k + |\mathcal{V}|)$ when restricted to a selected subset of size $k$. 
Both NCI and fDBD consistently outperform single-sample baselines.

We further observe that multiple-sampling based methods—Lexical Similarity, SelfCheckGPT NLI, and Semantic Entropy—are less effective on reasoning tasks compared to their original scope of concise question-answering. 
This supports our intuition that measuring discrepancies across samples is intrinsically challenging in the presence of diverse reasoning chains. 
In contrast, our methods consistently surpass these approaches in the vast majority of cases, demonstrating strong reliability and computational efficiency across models.

\paragraph{Inference Latency}

In addition to the computational complexity analysis, we measure end-to-end inference latency on an A100 GPU (ms/token) for standard decoding (without hallucination detection), perplexity, and our detectors in Table~\ref{tab:latency}.
Experiments are conducted on CSQA using \texttt{Llama-3.2-3B-Instruct}, with $k=1000$ as selected in Table~\ref{tab:hallucination_results_main}.
NCI and fDBD incur minimal latency overhead on top of standard decoding, comparable to perplexity.

\subsection{Hyper-parameter Sensitivity}

We next examine the sensitivity of fDBD performance to the choice of $k$, which controls the size of the alternative token set used for computing the decision boundary distance.
To this end, we analyze variations of $k$ around the selected value for each dataset on \textit{Llama-3.2-3B-Instruct}, where the selected $k$ is $1\,000$ for CSQA and 100 for GSM8K and AQuA.
Specifically, we evaluate $k$ values obtained by perturbing the selected value by $\pm 5\%$, $\pm 10\%$, and $\pm 20\%$.
As shown in Table~\ref{tab:k_stability}, fDBD performance remains highly stable across all tested variations, with AUROC changing only marginally as $k$ varies.
These results validate that fDBD is not overly sensitive to the exact choice of $k$. Selecting the order of magnitude of $k$, as done in Section~\ref{sec:main_result}, is sufficient, making it possible to transfer a value of $k$ selected on the validation set to the test set.

\subsection{Generalizability to Open-ended Tasks}

So far, we have focused on reasoning tasks. 
To evaluate generalizability to open-ended settings, we additionally experiment on TruthfulQA dataset \cite{lin2022truthfulqa}, a benchmark designed to test whether models generate truthful answers to questions where false but plausible misconceptions are common.
We report AUROC on \texttt{Llama-3.2-3B-Instruct} in Table~\ref{tab:truthfulqa}. 
For fDBD, we report results with $k=\text{ALL}$.
As shown, fDBD and NCI maintain competitive performance compared to single-sample baselines, validating their generalizability to open-ended tasks. 
See experimental details in Appendix~\ref{app:implementation_truthfulqa}.

\begin{table}[t!]
\centering
\caption{\textbf{OOD-inspired hallucination detectors NCI and fDBD achieve strong performance on open-ended tasks.}
Experiments on TruthfulQA with \texttt{Llama-3.2-3B-Instruct}.
Performance is reported in AUROC (higher is better).
Best performance is shown in \textbf{bold}, second best is \underline{underlined}.
}
\label{tab:truthfulqa}
\begin{tabular}{Z{0.45}Z{0.3}}
\toprule
\textbf{Method} & \textbf{AUROC} \\
\midrule
Perplexity & 63.78 \\
Predictive Probability & 51.32 \\
LN Predictive Probability & 50.38 \\
Max P & 60.62 \\
CoE-R & 48.91 \\
CoE-C & 55.16 \\
\rowcolor{lightgray}
\textbf{fDBD} & \underline{67.83} \\
\rowcolor{lightgray}
\textbf{NCI} & \textbf{68.17} \\
\bottomrule
\end{tabular}
\end{table}

\section{Related Work}

\subsection{Out-of-Distribution Detection}

In the context of classifiers, extensive prior work has investigated out-of-distribution (OOD) detection, with the goal of identifying test samples whose classes were unseen during training.
In particular, one line of work builds on pre-trained classifiers to design post-hoc OOD detectors. For example, \cite{hendrycks2019scaling, liang2018enhancing, liu2020energy, sun2021react, sun2022dice, xu2024out} define OOD scores in the output space of a classifier, whereas \cite{lee2018simple, sun2022out, liu2024fast, liu2025detecting} measure OOD-ness in the feature space.
In this work, we conceptually bridge OOD detection in classifiers to hallucination detection in LLMs. Specifically, we extend prior work on OOD detection, NCI\cite{liu2025detecting} and fDBD\cite{liu2024fast}, to the task of hallucination detection and demonstrate its strong performance in this setting.

\subsection{Hallucination Detection}

Hallucinations pose risks to the safe deployment of LLMs, and much existing research has therefore focused on detecting them at inference time.
One line of work detects hallucinations by measuring consistency across multiple samples \citep{lin2023generating, lin2022towards, manakul2023selfcheckgpt, xiao2021hallucination, kuhn2023semantic, chen2024inside, liu2026enhancing, HouLQAC024, jiang2023calibrating, gao2024spuq}.
While effective, such methods introduce nontrivial inference overhead due to the need for repeated sampling.
Another line of work aims to detect hallucinations from a single inference without repetitive sampling \cite{azaria2023internal, kossen2024semantic, li2023halueval, liu2023cognitive, manakul2023selfcheckgpt, marksgeometry, su2024unsupervised, wang2024latent}. 
These methods usually require training a separate model, adding additional computational cost and suffering from training-test distribution shift.  
To address the limitations of both lines of work, we detect hallucinations by analyzing the distance between internal features and the model’s decision boundary, yielding an inference-efficient and distribution-shift–robust approach.

\paragraph{Limitations.}

Our methods generally achieve strong performance on reasoning and open-ended tasks.
However, their advantage can diminish on datasets such as TriviaQA~\cite{joshi2017triviaqa}, where both correct and incorrect answers often span only a few tokens.
Better understanding and mitigating this gap across answer formats is an important direction for future work.
Our study is also limited to two geometric OOD scores. 
Exploring the broader literature on state-of-the-art OOD detection methods and adapting them to hallucination detection remains an open problem.

\section{Conclusion}

This work tackles hallucination detection in LLMs.
We adapt existing OOD detectors to account for the structural differences in large language models, yielding training-free, single-sample detectors that effectively identify hallucinations in reasoning tasks.
We hope this framework inspires further OOD-based approaches to enhance LLM safety.

\section*{Impact Statement}

This paper presents work whose goal is to advance the field of Machine
Learning. There are many potential societal consequences of our work, none
which we feel must be specifically highlighted here.

\bibliography{example_paper}
\bibliographystyle{icml2026}

\newpage
\appendix
\onecolumn

\appendix

\section{Scalability to Larger Language Model}\label{app:large_model}

To evaluate the scalability of our approach to larger language models, we
conduct experiments on \texttt{Qwen-3-32B} \citep{yang2025qwen3technicalreport}.
For computational concerns, we focus on three computationally efficient baselines: Perplexity,
Predictive Probability, and Length-Normalized (LN) Predictive Probability.
For fDBD, we did not do hyperparameter sweeping of k and report with {all k}. 
As shown in Table~\ref{tab:large_hallucination_results}, NCI achieves strong overall performance, while fDBD consistently outperforms these baselines across
CSQA, GSM8K, and AQuA.
These results demonstrate that methods adapted from OOD detection scale favorably to large language models.

\begin{table*}[h]
\centering
\caption{ \textbf{OOD-inspired hallucination detectors scale to larger language models.}
Results on \texttt{Qwen-3-32B}. We report AUROC (higher is better), with the best performance highlighted in \textbf{bold}.
NCI achieves strong overall performance, while fDBD consistently outperforms the baselines across datasets.
}
\begin{tabular}{Z{0.24} Z{0.12} Z{0.12} Z{0.12} Z{0.1}}
\toprule
\textbf{Method} & \textbf{CSQA} & \textbf{GSM8K} & \textbf{AQuA} \\
\midrule 
Perplexity                  & 58.44 & 76.86 & 53.89 \\
Predictive Probability      & 59.30 & 74.99 & 57.97 \\
LN Predictive Probability   & 59.35 & 74.38 & 58.21 \\
\rowcolor{lightgray}
\textbf{NCI}                & 64.54 & 75.75 & 60.28 \\
\rowcolor{lightgray}
\textbf{fDBD}               & \textbf{65.68} & \textbf{80.60} & \textbf{65.37} \\
\bottomrule
\end{tabular}
\label{tab:large_hallucination_results}
\end{table*}

\section{Generalizability Across Model Scales}\label{app:model_scales}

We further evaluate hallucination detection performance across the Qwen3 family (0.6B, 1.7B, 4B, and 8B). We report AUROC on the CSQA dataset in Table~\ref{tab:qwen_scale_results}. 
For fDBD we report with $k = \text{ALL}$.
Compared to single-sample baselines, NCI achieves strong overall performance across model sizes, whereas fDBD consistently achieves the strongest results among all methods.

\begin{table}[h]
\centering
\caption{\textbf{OOD-inspired hallucination detectors achieve superior performance across Qwen3 models of different scales.}
Results on CSQA dataset. 
We report AUROC (higher is better), with the best performance highlighted in \textbf{bold}.
NCI achieves strong overall performance,
while fDBD consistently outperforms the baselines across datasets.}
\label{tab:qwen_scale_results}
\begin{tabular}{ccccc}
\toprule
\textbf{Method} & \textbf{Qwen-3-0.6B} & \textbf{Qwen-3-1.7B} & \textbf{Qwen-3-4B} & \textbf{Qwen-3-8B} \\
\midrule
Perplexity & 54.62 & 57.77 & 57.30 & 58.70 \\
Predictive Probability & 54.06 & 58.89 & 59.18 & 60.33 \\
LN Predictive Probability & 53.92 & 58.94 & 60.21 & 60.56 \\
Max P & 56.74 & 50.00 & 50.00 & 50.00 \\
CoE-R & 50.02 & 52.00 & 53.43 & 54.81 \\
CoE-C & 50.10 & 52.43 & 53.36 & 55.08 \\
\rowcolor{lightgray}
\textbf{NCI} & 57.76 & 58.87 & 61.92 & 66.69 \\
\rowcolor{lightgray}
\textbf{fDBD} & \textbf{58.94} & \textbf{59.59} & \textbf{63.90} & \textbf{66.93} \\
\bottomrule
\end{tabular}
\end{table}

\section{Generalizability to Base Models}
\label{app:base_models}

Extending beyond the instruction-tuned models studied in the main paper, we validate the generalizability of our methods on base models. 
Specifically, we evaluate on CSQA with \texttt{Llama-3.2-3B-Base}. 
As shown in Table~\ref{tab:base_model_csqa}, our detectors outperform the baselines in AUROC, validating their generalizability to base models. 

\begin{table}[h]
\centering
\caption{\textbf{OOD-inspired hallucination detectors achieve superior performance on base models.}
Results on CSQA dataset with \texttt{Llama-3.2-3B-Base}. 
We report AUROC (higher is better), with the best performance highlighted in \textbf{bold}.}
\label{tab:base_model_csqa}
\begin{tabular}{Z{0.25}Z{0.15}}
\toprule
\textbf{Method} & \textbf{AUROC} \\
\midrule
Perplexity & 59.09 \\
Predictive Probability & 60.44 \\
LN Predictive Probability & 60.43 \\
Max P & 55.51 \\
CoE-R & 47.56 \\
CoE-C & 51.21 \\
\rowcolor{lightgray}
\textbf{NCI} & 62.65 \\
\rowcolor{lightgray}
\textbf{fDBD} & \textbf{62.70} \\
\bottomrule
\end{tabular}
\end{table}

\section{Generalizability to MoE Models}\label{app:moe_models}

Our methods remain applicable to Mixture-of-Experts (MoE) models, since their language modeling head remains dense; the MoE mechanism operates only within the transformer blocks. 
To empirically validate this, we evaluate our detectors on the AQuA dataset using the MoE model \texttt{Qwen-3-30B-A3B}.
Table~\ref{tab:moe_aqua} compares our methods against single-sample detectors. 
For fDBD, we report results with $k=\text{ALL}$.
Compared to single-sample baselines, NCI achieves strong overall performance across model sizes, whereas fDBD consistently achieves the strongest results among all methods.

\begin{table}[h]
\centering
\caption{\textbf{OOD-inspired hallucination detectors achieve superior performance on MoE models.}
Results on AQuA dataset with MoE model \texttt{Qwen-3-30B-A3B}. 
We report AUROC (higher is better), with the best performance highlighted in \textbf{bold}.}
\label{tab:moe_aqua}
\begin{tabular}{Z{0.25}Z{0.15}}
\toprule
\textbf{Method} & \textbf{AUROC} \\
\midrule
Perplexity & 65.40 \\
Predictive Probability & 52.82 \\
LN Predictive Probability & 52.04 \\
Max P & 50.00 \\
CoE-R & 49.22 \\
CoE-C & 58.98 \\
\rowcolor{lightgray}
\textbf{NCI} & \textbf{70.09} \\
\rowcolor{lightgray}
\textbf{fDBD} & 69.60 \\
\bottomrule
\end{tabular}
\end{table}

\section{Generalizability to Alternative Architectures}
\label{app:alternative_architectures}

To test the applicability of our methods beyond Llama and Qwen models, we experiment with the alternative architecture \texttt{Mistral-7B-Instruct-v0.3} and report AUROC on CSQA in Table~\ref{tab:mistral_csqa}. 
Both fDBD and NCI maintain competitive performance compared to single-sample baselines, validating the generalizability of our methods to alternative architectures.

\begin{table}[h]
\centering
\caption{\textbf{OOD-inspired hallucination detectors maintain superior performance on Mistral-7B-Instruct-v0.3, validating generalizability to alternative architectures.}
We report AUROC (higher is better) on CSQA dataset. 
The best performance highlighted in \textbf{bold}.
}
\label{tab:mistral_csqa}
\begin{tabular}{Z{0.25}Z{0.15}}
\toprule
\textbf{Method} & \textbf{AUROC} \\
\midrule
Perplexity & 58.96 \\
Predictive Probability & 58.82 \\
LN Predictive Probability & 58.75 \\
Max P & 57.75 \\
CoE-R & 49.82 \\
CoE-C & 62.34 \\
\rowcolor{lightgray}
\textbf{NCI} & 64.91 \\
\rowcolor{lightgray}
\textbf{fDBD} & \textbf{66.36} \\
\bottomrule
\end{tabular}
\end{table}

\section{Effect of Sample Size on Multi-sample Baselines}
\label{app:selfcheck_sample_size}

In Section~\ref{sec:main_result}, we evaluate multi-sample baselines with a fixed sample size of 3. 
To further examine whether increasing the number of samples improves relative performance, we evaluate SelfCheckGPT-NLI on AQuA with \texttt{Llama-3.2-3B-Instruct} using different sample sizes. 
As shown in Table~\ref{tab:selfcheck_sample_size}, increasing the sample size does not substantially improve performance in this setting, and SelfCheckGPT-NLI remains below our methods. This provides additional evidence that multi-sample detectors may face challenges on reasoning tasks, where the inherent diversity of valid reasoning paths can make response-level comparisons difficult.

\begin{table}[t]
\centering
\caption{\textbf{OOD-inspired hallucination detectors outperform baseline multi-sample detect across different sample size.}
We report AUROC (higher is better) on AQuA dataset with \texttt{Llama-3.2-3B-Instruct}. 
The best performance highlighted in \textbf{bold}.
}
\label{tab:selfcheck_sample_size}
\begin{tabular}{Z{0.35}Z{0.15}}
\toprule
\textbf{Method} & \textbf{AUROC} \\
\midrule
SelfCheckGPT-NLI (sample size = 3) & 66.01 \\
SelfCheckGPT-NLI (sample size = 5) & 65.51 \\
SelfCheckGPT-NLI (sample size = 10) & 66.52 \\
\rowcolor{lightgray}
\textbf{NCI} & 74.41 \\
\rowcolor{lightgray}
\textbf{fDBD} & \textbf{75.80} \\
\bottomrule
\end{tabular}
\end{table}

\section{Implementation details for Reasoning Tasks}\label{app:implementation}

\subsection{Examples of Questions and Model Responses}

\subsubsection{CSQA Example}
\vspace{0.5em}
\begin{mdframed}[backgroundcolor=gray!10]
\textbf{Question:} A revolving door is convenient for two direction travel, but it also serves as a security measure at a what?

(a) bank

(b) library

(c) department store

(d) mall

(e) new york

\textbf{Model response:} The answer should be a place where security is a priority. Of the above choices, a bank is the most likely to use a revolving door as a security measure. So the answer is (a).

\end{mdframed}

\subsubsection{GSM8K Example}
\vspace{0.5em}
\begin{mdframed}[backgroundcolor=gray!10]
\textbf{Question:} Janet’s ducks lay 16 eggs per day. She eats three for breakfast every morning and bakes muffins for her friends every day with four. She sells the remainder at the farmers' market daily for \$2 per fresh duck egg. How much in dollars does she make every day at the farmers' market?

\textbf{Model response:} 
Janet’s ducks lay 16 eggs per day. She uses 3 for breakfast and 4 for baking, which totals 3 + 4 = 7 eggs used daily. The remaining eggs are 16 - 7 = 9. She sells these for \$2 each, so 9 x 2 = 18. The answer is 18.

\end{mdframed}

\subsubsection{AQuA Example}
\vspace{0.5em}
\begin{mdframed}[backgroundcolor=gray!10]
\textbf{Question:} A password needs to contain 2 letters and 3 numbers. How many different passwords are possible if repetition of letters and numbers is allowed??

(a) 676000

(b) 676

(c) 100

(d) 6760

(e) 25

\textbf{Model response:}  There are 26 letters in the alphabet and 10 digits (0-9). Since repetition is allowed, there are 26 * 26 = 676 ways to choose the letters and 10 * 10 * 10 = 1000 ways to choose the numbers. The total number of different passwords is 676 * 1000 = 676000. The answer is (a).

\end{mdframed}

\subsection{Chain-of-Thought Few-shot Prompting}

Following \citet{wei2022chain}, we use chain-of-thought few-shot prompting to elicit reasoning in the model’s responses.
For all datasets, we adopt the canonical examples from \citet{wei2022chain}.
These prompts guide the model to produce a formatted answer after completing the reasoning process.
For answer extraction, we extract the letter following \emph{``So the answer is”} for CSQA, the number following \emph{``The answer is”} for GSM8K, and the answer letter following \emph{``The answer is”} for AQuA.
An answer is considered correct if the extracted answer matches the ground truth.




\subsection{System Prompt}

Since few-shot prompting alone is insufficient for some models (e.g. \texttt{Qwen-2.5-7B-Instruct}) to follow the required formatting, we further reinforce the format with the following system prompts.
The system prompts remain consistent across all models in our experiments.

\subsubsection{CSQA System Prompt}
\vspace{0.5em}
\begin{mdframed}[backgroundcolor=gray!10]
You are a plain-text reasoning engine. You must strictly follow the reasoning format shown in the examples. Do not start new paragraphs or lines. End each answer with: 'So the answer is (\textless letter\textgreater).'

\end{mdframed}

\subsubsection{GSM8K System Prompt}
\vspace{0.5em}
\begin{mdframed}[backgroundcolor=gray!10]
You are a mathematical reasoning engine. You must strictly follow the reasoning format shown in the examples. Use plain ASCII text only. Do not start new paragraphs or lines. End each answer with: 'The answer is \textless number\textgreater.'

\end{mdframed}

\subsubsection{AQuA System Prompt}
\vspace{0.5em}
\begin{mdframed}[backgroundcolor=gray!10]
You are a mathematical reasoning engine. You must strictly follow the reasoning format shown in the examples. Use plain ASCII text only. Do not start new paragraphs or lines. End each answer with: 'The answer is (\textless letter\textgreater).'

\end{mdframed}

\section{Implementation details for Open-ended Tasks (TruthfulQA)}\label{app:implementation_truthfulqa}

\subsection{Prompt Details}

We omit the system prompt in TruthfulQA experiments, as \texttt{Llama-3.2-3B-Instruct} reliably follows the required format under few-shot prompting without it.

Following \citet{li2023inference} and prior work, we adopt the same few-shot prompting setup for TruthfulQA generation, as shown below. The prompt remains unchanged, except that, to better align with the goal of hallucination detection, we omit the abstention example (“I have no comment”).
\vspace{0.5em}
\begin{mdframed}[backgroundcolor=gray!10]
\textbf{Q:} What is human life expectancy in the United States?

\textbf{A:} Human life expectancy in the United States is 78 years.
\vspace{0.8em}

\textbf{Q:} Who was president of the United States in 1955?

\textbf{A:} Dwight D. Eisenhower was president of the United States in 1955.
\vspace{0.8em}

\textbf{Q:} Which party did he belong to?

\textbf{A:} He belonged to the Republican Party.
\vspace{0.8em}

\textbf{Q:} How does a telescope work?

\textbf{A:} Telescopes use lenses or mirrors to focus light and make objects appear closer.
\vspace{0.8em}

\textbf{Q:} Where were the 1992 Olympics held?

\textbf{A:} The 1992 Olympics were held in Barcelona, Spain.

\end{mdframed}

\subsection{{LLM as Judge}}
Since TruthfulQA is an open-ended task, evaluating the correctness of model outputs requires an LLM judge. 
While the original TruthfulQA paper employs a fine-tuned GPT-3 model as the LLM judge, the corresponding API has since been deprecated; we instead use an open-source Qwen model (\texttt{Qwen-2.5-7B-Instruct}) with the following prompt.
\vspace{0.5em}
\begin{mdframed}[backgroundcolor=gray!10]
You are a factual answer judge. You will be given a question, a reference best answer, and a model's answer. Respond with exactly one word — yes if the model's answer is factually correct and consistent with the reference answer, or no if it is incorrect or misleading. Do not explain. Output only: yes or no.

\end{mdframed}

\section{Proof of Theorem~\ref{thm:1}}
\label{sec:appendix_proof}

\begin{proof}
Let $\hat{c} = \arg\max_{v \in \mathcal{V}} \bm{w}_v^\top \bm{z} + b_v$ denote the model’s most confident internal prediction for feature $\bm{z}$.
For any alternative token $c \in \mathcal{V}$ with $c \neq \hat{c}$, recall that the decision region of token $c$ is
\begin{equation*}
\mathcal{R}_c
=
\left\{
\bm{z} \in \mathbb{R}^{d_{\text{model}}}
:
\bm{w}_c^\top \bm{z} + b_c
>
\bm{w}_{v}^\top \bm{z} + b_{v},
\ \forall v \neq c
\right\}.
\end{equation*}
We further define a relaxed region
\begin{equation*}
\mathcal{R}_c'
=
\left\{
\bm{z} \in \mathbb{R}^{d_{\text{model}}}
:
\bm{w}_c^\top \bm{z} + b_c
>
\bm{w}_{\hat{c}}^\top \bm{z} + b_{\hat{c}}
\right\},
\end{equation*}
which only enforces that token $c$ surpasses the current most confident token $\hat{c}$.
By construction, $\mathcal{R}_c \subseteq \mathcal{R}_c'$.
Therefore,
\begin{equation}
\label{eq:proof_1}
D_f(\bm{z}, c)
=
\inf_{\bm{z}' \in \mathcal{R}_c}
\| \bm{z}' - \bm{z} \|_2
\ge
\inf_{\bm{z}' \in \mathcal{R}_c'}
\| \bm{z}' - \bm{z} \|_2 .
\end{equation}

Geometrically, the quantity
$\inf_{\bm{z}' \in \mathcal{R}_c'} \| \bm{z}' - \bm{z} \|_2$
corresponds to the Euclidean distance from $\bm{z}$ to the hyperplane
\begin{equation}
\label{eq:hyperplane}
(\bm{w}_{\hat{c}} - \bm{w}_c)^\top \bm{z}
+
(b_{\hat{c}} - b_c)
=
0.
\end{equation}
Thus,
\begin{equation}
\label{eq:proof_2}
\inf_{\bm{z}' \in \mathcal{R}_c'}
\| \bm{z}' - \bm{z} \|_2
=
\frac{
\left|
(\bm{w}_{\hat{c}} - \bm{w}_c)^\top \bm{z}
+
(b_{\hat{c}} - b_c)
\right|
}{
\| \bm{w}_{\hat{c}} - \bm{w}_c \|_2
}.
\end{equation}
Combining \eqref{eq:proof_1} and \eqref{eq:proof_2} yields the lower bound in Eqn.~\eqref{eq:closedForm}.

Finally, let
\begin{equation}
\label{eq:c2-def}
c_2
=
\arg\min_{c \in \mathcal{V},\, c \neq \hat{c}}
\inf_{\bm{z}' \in \mathcal{R}_c'}
\| \bm{z}' - \bm{z} \|_2
\end{equation}
denote the token whose associated hyperplane is closest to $\bm{z}$, and let $\bm{p}$ be the projection of $\bm{z}$ onto this hyperplane.
Since all other hyperplanes are farther from $\bm{z}$ than $\bm{p}$, the points $\bm{z}$ and $\bm{p}$ lie on the same side of every such hyperplane.
Consequently, $\bm{p}$ lies in the closure of $\mathcal{R}_{c_2}$, implying
\begin{equation}
\| \bm{p} - \bm{z} \|_2
\ge
\inf_{\bm{z}' \in \mathcal{R}_{c_2}}
\| \bm{z}' - \bm{z} \|_2.
\end{equation}
Together with the definition of $c_2$, this shows that the lower bound in Eqn.~\eqref{eq:closedForm} is tight.
\end{proof}

\section{Derivative of Decision-Neutral Point}\label{app:analytical_mean}

Formally, for a language head, a decision-neutral point $z_\star$ is a feature at which all tokens 
receive identical logits, i.e.,
$$
\bm{w}_v^\top \bm{z_\star} + b_v = \bm{w}_{v'}^\top \bm{z_\star} + b_{v'}, \forall v, v' \in \mathcal{V}
$$
However, the resulting system of linear equations may be overdetermined and thus fail to admit a feasible solution.
We therefore relax this notion and define the decision-neutral closest point.
\begin{definition}[Decision-Neutral Closest Point]
\label{def:dncp}
Given a classification head $f_\mathrm{head}$, a \textit{decision-neutral closest point} is the feature that minimizes the variance of logits across the entire vocabulary:
\begin{equation}
    \hat{\bm{z}}_\star = \arg\min_{\bm{z}} \sum_{v \in \mathcal{V}} \big(\bm{w}_v^\top \bm{z} + b_v - \bar{l}\big)^2,
\end{equation}
where $\bar{l} = \frac{1}{|\mathcal{V}|}\sum_{v \in \mathcal{V}} (\bm{w}_v^\top \bm{z} + b_v)$ represents the mean logit value across the vocabulary.
\end{definition}

To derive analytical solution in Equation~\ref{eq:analytical_proxy}, we first convert the definition above to matrix form: 
\begin{equation}
    \hat{\bm{z}}_\star = \arg\min_{\bm{z}} \| P(W\bm{z} + \bm{b}) \|^2
\end{equation}
where $P = I - \frac{1}{n} \mathbf{1}\mathbf{1}^\top$ is the symmetric and idempotent centering matrix. 
Define the function of $z$ as
\begin{align}
    f(\bm{z}) &= \| P(W\bm{z} + \bm{b}) \|^2
\end{align}
To find $\hat{\bm{z}}_\star$, we now compute the gradient with respect to $\bm{z}$:
\begin{align}
    \nabla_{\bm{z}} f(\bm{z}) &= (PW)^\top \cdot 2[P(W\bm{z} + \bm{b})] \\
    &= 2 W^\top P^\top P (W\bm{z} + \bm{b})
\end{align}
Setting the gradient to zero and we have:
\begin{equation}
    \nabla_{\bm{z}} f(\bm{z}) = 2 W^\top P (W\bm{z} + \bm{b})
\end{equation}

The analytical solution is thus:
\begin{equation}
    \hat{\bm{z}}_\star = -(W^\top P W)^\dagger W^\top P \bm{b},
\end{equation}
where 
$(\cdot)^{\dagger}$ denotes the Moore–Penrose pseudo-inverse.

\section{Decoded-token based Variant of NCI}\label{app:nci_decoded}

In addition to computing the NCI score with respect to the highest-logit token, we evaluate a variant where feature proximity is computed against the weight vector of the actually decoded token at each step. 
Under greedy decoding, this setup is the same as the adopted NCI in the main paper. 
Under stochastic decoding, this setup aligns the metric more directly with the stochastic output of the model.  
We evaluate performance across a range of standard decoding temperatures $temp \in \{ 0.2, 0.5, 0.8, 1.0 \}$. 
For each temperature setting, we conduct five independent runs using different random seeds and report the mean AUROC along with its standard deviation.
As shown in Table~\ref{tab:nci_alternative}, the performance of this variant is similar to the max-logit version of NCI. 

\begin{table*}[h]
\centering
\caption{\textbf{Two NCI variants, max logit token based and decoded token based, exhibit similar performance under stochastic decoding.}
Results on CSQA using \texttt{Llama-3.2-3B-Instruct}.
AUROC is reported (higher is better).
For each temperature, mean $\pm$ standard deviation is computed over five random seeds.
}\label{tab:nci_alternative}
\begin{tabular}{Z{0.25} Z{0.15}Z{0.15}Z{0.15}Z{0.15}}
\toprule
\textbf{Method} & \textbf{temp = 0.2} & \textbf{temp = 0.5} & \textbf{temp = 0.8} & \textbf{temp = 1.0} \\ \midrule
NCI  (max-logit token)           & $67.07 \pm 0.53$ & $66.04 \pm 1.59$ & $67.53 \pm 0.88$ & $67.93 \pm 1.65$ \\
NCI (decoded token) & $67.10 \pm 0.52$ & $66.04 \pm 1.63$ & $67.44 \pm 0.86$ & $67.70 \pm 1.53$ \\
\bottomrule
\end{tabular}
\end{table*}

\end{document}